\DeclareMathOperator*{\argmax}{arg\,max}
\newcolumntype{P}[1]{>{\centering\arraybackslash}p{#1}}
\useunder{\uline}{\ul}{}
\theoremstyle{definition}
\newtheorem{definition}{Definition}[]
\theoremstyle{plain}
\newtheorem{theorem}{Theorem}
\theoremstyle{remark}
\newtheorem*{remark}{Remark}
\title{Modeling Human Driver Interactions \\ Using an Infinite Policy Space Through Gaussian Processes
}
\author{
  Cem Okan Yaldiz \\
  School of Electrical and Computer Engineering \\
  Georgia Institute of Technology \\
  Atlanta\\
  \texttt{cyaldiz3@gatech.edu} \\
   \And
  Yildiray Yildiz \\
  Department of Mechanical Engineering \\
  Bilkent University \\
  Ankara\\
  \texttt{yyildiz@bilkent.edu.tr} \\
}
\begin{document}
\maketitle

\begin{abstract}
This paper proposes a method for modeling human driver interactions that relies on multi-output gaussian processes. The proposed method is developed as a refinement of the game theoretical hierarchical reasoning approach called "level-\(k\) reasoning" which conventionally assigns discrete levels of behaviors to agents. Although it is shown to be an effective modeling tool, the level-\(k\) reasoning approach may pose undesired constraints for predicting human decision making due to a limited number (usually 2 or 3) of driver policies it extracts. The proposed approach is put forward to fill this gap in the literature by introducing a continuous domain framework that enables an infinite policy space. By using the approach presented in this paper, more accurate driver models can be obtained, which can then be employed for creating high fidelity simulation platforms for the validation of autonomous vehicle control algorithms. The proposed method is validated on a real traffic dataset and compared with the conventional level-\(k\) approach to demonstrate its contributions and implications.
\end{abstract}

\keywords{Driver modeling \and autonomous driving \and gaussian process \and game theory}

\section{Introduction}
One of the promises of the autonomous vehicle (AV) technology is to decrease the crash rates related to human faults, which will lead to a safer transportation. It is argued that AVs will help not only in preventing accidents but also in solving traffic congestions and in reducing emissions \cite{yurtsever2020survey}. Despite these promises, however, the emergence of AVs in regular commercial use is still pending mainly because of safety issues. Indeed, safety is the very first challenge for robotic systems where humans are in the loop \cite{sand2013closing}. Therefore, their safety must be validated before AVs become a usual sight at everyday traffic. 

There is a high probability that AVs will have their own roads/lanes that are dedicated to their use only \cite{carvalho2015automated}. However, in a scenario where human drivers and AVs coexist in the same environment, the safety problem becomes a tough challenge due to the very nature of hard-to-predict human behavior. In \cite{kalra2016driving}, it is stated that to prove AVs' safety, hundreds of millions of miles are needed to be driven without fatality, which correspond to years of observation. This particular result makes it difficult to validate AV technology in real roads. Also, observing AVs in different test cases that can occur in real life is non-trivial. On the other hand, using computer simulations for testing both decreases the required time and effort significantly and help create complex environments for observing AVs' reactions. Therefore, creating reliable simulation environments that can mimic the real world has utmost importance for the validation of AV control algorithms. To achieve this, modeling human driver interactions as accurate as possible is necessary.

Joint employment of reinforcement learning and game theory is demonstrated to be an effective tool for modeling human interactions. Examples for the utilization of this method to predict human-human and human-automation interactions can be found in \cite{yildiz2014predicting}, \cite{musavi2017unmanned}, \cite{albaba2019modeling} and \cite{albaba20213D} for aviation, and in \cite{li2017game}, \cite{koprulu2021act}, \cite{karimi2020receding}, \cite{zhang2017finite} and \cite{jain2018multi} for automotive applications. The main game theoretical approach used in these studies are called the "level-\(k\) reasoning" \cite{camerer2004cognitive}, \cite{stahl1995players}, \cite{crawford2007level}, \cite{costa2009comparing}. This method is based on the idea that humans have different levels of decision making algorithms, where level-\(k\) is a best response to level-(\(k\)-1). Although level-\(k\) reasoning is shown to have a reasonable match with real human driving behavior \cite{albaba2020driver}, it suffers from the limited number of driver policies. In this paper, we propose a solution to this problem by introducing an infinite policy space for human interactions. Our solution uses the limited policies suggested by the conventional level-\(k\) approach as inputs to a multi-output Gaussian Process (GP) to obtain real-valued driver levels. Furthermore, we introduce and prove a "best response" theorem which precisely defines the hierarchy between real-valued reasoning levels. Therefore, the proposed solution can be considered as a level-\(k\) refinement. We also validate our results using a real traffic dataset. A preliminary version of this work is submitted to 2022 American Control Conference.

This paper is organized as follows: In Section \(\ref{relatedwork}\), related studies in the literature on human driver modeling through different approaches and on level-\(k\) game theoretical reasoning concept are given. In Section \(\ref{background}\), we briefly summarize the minimum background knowledge about single-output and multi-output GPs along with the fundamental premises of level-\(k\) framework. The method developed in this paper to transfer the problem from integer-valued levels to real-valued levels is described in Section \(\ref{method}\). Furthermore, a best response theorem for the interaction of agents with real-valued levels is introduced in Section \(\ref{method}\). In Section \(\ref{validation}\), we present the construction of gaussian processes for the task and how we validate our approach on real traffic data. The results that demonstrate contributions of the presented approach are shared in Section \(\ref{results}\) along with a discussion. Finally, an overall summary is provided in Section \(\ref{conclusion}\).

\section{Related Work}
\label{relatedwork}
There is a considerable body of work in human driver modeling in terms of state estimation, trait estimation, intention estimation, motion prediction or anomaly detection \cite{brown2020modeling}. The proposed approach in this paper is related to the context of \emph{trait estimation} where the style of the driver is attempted to be extracted. Several solutions for these problems are introduced in the literature. Among these, Naive Bayes models \cite{gillmeier2019prediction}, gaussian mixture models \cite{angkititrakul2009evaluation}, markov chain models \cite{aasljung2019probabilistic}, markov state space models \cite{lefevre2013intention}, optimal control based models \cite{han2019human}, inverse reinforcement learning based models \cite{sadigh2016planning}, \cite{sun2018probabilistic}, Gaussian Process based models \cite{armand2013modelling}, \cite{tran2014online}, \cite{kruger2019probabilistic}, \cite{guo2019modeling}, game theoretical approaches \cite{schester2019longitudinal}, \cite{kang2018modeling}, \cite{yan2018game}, \cite{talebpour2015modeling}, Gated Recurrent Unit based models \cite{wang2017capturing}, joint use of recurrent and convolutional neural networks \cite{gebert2019end}, joint use of recurrent and generative networks \cite{kuefler2017imitating}, and joint use of generative networks and imitation learning \cite{ho2016generative} can be counted.

In recent years, a new method combining level-\(k\) game theoretical approach with reinforcement learning is proposed and used in several modeling domains \cite{yildiz2014predicting}, \cite{musavi2017unmanned}, \cite{musavi2016game}, \cite{albaba2019stochastic}, \cite{tian2018adaptive}, \cite{li2019game}, \cite{garzon2019game}. This method distinguishes itself from others by its ability to model several (>100) agents as strategic decision makers, simultaneously, instead of modeling a single decision maker and assigning predetermined behavioral patterns to the rest of the agents. In all of these studies, human drivers are modeled via discrete levels. What distinguishes the method presented in this paper from the earlier ones is that we abandon the belief that the humans can be successfully modeled via only  discrete levels. We claim that humans can have behavioral models in a continuous range within the intervals between these discrete levels. We introduce and prove a theorem about the hierarchy of real-valued reasoning levels, which establishes the proposed method as a \emph{level-\(k\) refinement}. In brief, our contributions in this paper are as follows: 
\begin{enumerate}
  \item We put forward a refinement on level-\(k\) reasoning concept that allows us to work between discrete levels in continuous domain for modeling human behavior. To our knowledge, this is the first study on continuous level-\(k\) modeling.
  \item We develop the necessary theoretical background to provide a hierarchical structure among real-valued reasoning levels.
  \item Newly created continuous driver models are investigated on a real dataset to provide statistical analysis for model prediction capability. The results are compared with the earlier results using benchmark cases.
\end{enumerate}

\section{Background} 

\label{background}

In this section, we provide the minimum necessary background before we introduce the infinite dimensional policy space in the following sections.   

\subsection{Level-k Reasoning, Deep Reinforcement Learning and Their Synergistic Employment }
The concept of level-\(k\) reasoning holds that humans have different levels of reasoning in decision making. In this framework, a level-0 agent do not take into account other agents' possible actions when he/she determines his/her own actions. In other words, level-0 represents a "non-strategic" agent and only follows predetermined rules. On the other hand, a level-1 agent treats other agents in the scenario as if they are level-0 and acts by maximizing rewards according to this assumption. Similarly, a level-2 agent treats other agents as if they are level-1 and moves accordingly. Hierarchically, a level-\(k\) agent chooses its actions by maximizing its rewards based on the assumption that other agents in the environment are following a level-(\(k\)-1) reasoning.

In order to obtain the best responses to level-(\(k\)-1) drivers, we use reinforcement learning (RL) to maximize the level-\(k\) agent's utility in the time extended scenario. Since we have a large state space, instead of a tabular RL approach, we prefer to use the deep Q-network (DQN). In-depth descriptions of level-\(k\) reasoning, DQN and their conjoint usage for modeling can be found at \cite{lee2012game}, \cite{mnih2015human} and \cite{albaba2020driver}, respectively.

\subsection{Gaussian Processes}
In this section, we briefly explain two types of Gaussian Processes (GP), namely, the single output and multi-output GPs.
\subsubsection{Single Output Gaussian Processes}
A GP is a random process for which the joint distribution of any finite subset of random variables follows a multi-variate Gaussian distribution. A function \(f : \mathbb{R} \rightarrow \mathbb{R}\) that is modeled by a GP is expressed as  
\begin{equation} \label{eq16}
\begin{split}
f(x) \sim \mathcal{GP}(\mu(x) , k(x, x')),
\end{split}
\end{equation}
where \(\mu(x)\) is the mean function and \(k(x, x')\) is the covariance function (kernel) which can be explicitly written as
\begin{equation}
    \mu(x) = E[f(x)],
\end{equation}
\begin{equation}
    k(x, x') = cov(f(x), f(x')) = E[(f(x) - \mu(x))(f(x')-\mu(x'))],
\end{equation}
where \(E[\hspace{0.1cm}]\) and \(cov()\) represent the expectation and covariance, respectively.
From the definition of GP, any \(n\) samples chosen from the process will also be jointly Gaussian, which can be stated as 
\begin{equation}
    \begin{bmatrix}
        f(x_1)\\
        . \\
        . \\
        . \\
        f(x_n)
    \end{bmatrix}
    \sim 
    \mathcal{N}(
    \begin{bmatrix}
        \mu(x_1)\\
        . \\
        . \\
        . \\
        \mu(x_n)
    \end{bmatrix},
    \Sigma)
\end{equation}
where \( \Sigma \in \mathbb{R}^{n \times n}\) is the covariance matrix containing \(k(x_i, x_j) \hspace{0.1cm}i, j=1, 2, ... , n, \) at the \(i^{th}\) row and \(j^{th}\) column. A valid \(\Sigma\) is positive definite, symmetric and invertible.

Given a GP, which is created by using observed samples \(\mathcal{D}=\{(x_i, f(x_i))\}\), \(i = 1,...,n\) with the mean \(\mu(x)\) and the kernel \(k(x, x')\), we want to make predictions of \(f(x_j^*)\) at points \(x_j^*\), \(j = 1, 2, ..., m\). The output vectors \(\boldsymbol{f} = [f(x_1), ..., f(x_n)]^T\) and \(\boldsymbol{f^*} = [f(x_1^*), ..., f(x_m^*)]^T\) are jointly Gaussian and their distribution can be expressed as

\begin{equation}
    \begin{bmatrix}
    \boldsymbol{f}\\
    \boldsymbol{f^*}
 \end{bmatrix}
 \sim \mathcal{N} (\boldsymbol{\mu}, \boldsymbol{\Sigma}),
\end{equation}
where 
\begin{equation}
\label{eqn:covsubmatrix}
    \boldsymbol{\Sigma} = \begin{bmatrix}
    \Sigma & \Sigma_*^T\\
    \Sigma_* & \Sigma_{**}\end{bmatrix}, \boldsymbol{\mu} = \begin{bmatrix}
    \mu(x) \\
    \mu(x^*)
\end{bmatrix}.
\end{equation}
The covariance sub-matrices \(\Sigma \in \mathbb{R}^{n \times n}\), \(\Sigma_* \in \mathbb{R}^{m \times n}\) and \(\Sigma_{**} \in \mathbb{R}^{m \times m}\) contain \(k(x_i, x_j)\), \(k(x_i^*, x_j)\) and \(k(x_i^*, x_j^*)\), respectively, as elements. 

We can predict the distribution of \(\boldsymbol{f^*}\) using the conditional probability, given \(\mathcal{D}\). Since the underlying assumption is a Gaussian distribution, the conditional distribution of \(\boldsymbol{f^*}\) will also be Gaussian which can be represented as
\begin{equation} 
\label{eqn:predictivemean1}
\begin{split}
\boldsymbol{f^*} | x, \boldsymbol{f}, x^* \sim \mathcal{N}(\mu_{\boldsymbol{f^*}|\mathcal{D}}, \Sigma_{\boldsymbol{f^*}|\mathcal{D}}).
\end{split}
\end{equation}
where \(\mu_{\boldsymbol{f^*}|\mathcal{D}} = \mu(x^*) + \Sigma_*\Sigma^{-1}\boldsymbol{f}\) and \(\Sigma_{\boldsymbol{f^*}|\mathcal{D}} = \Sigma_{**} - \Sigma_*\Sigma^{-1}\Sigma_*^T\).
Setting the means \(\mu(x)\) and \(\mu(x^*)\) to zero to avoid expensive posterior computations, the posterior predictive mean in \((\ref{eqn:predictivemean1})\), denoted as \(\mu_{\boldsymbol{f^*}|\mathcal{D}}\), can be calculated as 

\begin{equation}
    \mu_{\boldsymbol{f^*}|\mathcal{D}} = \Sigma_*\Sigma^{-1}\boldsymbol{f}.
\end{equation}

\subsubsection{Multi-output Gaussian Processes}
When the problem is to map an input \(x \in \mathbb{R}^P\) to an output \(y = f(x)\) using the mapping \(f : \mathbb{R}^P \rightarrow \mathbb{R}^D\), the covariance matrix should be expanded to include the covariances between \(D\) outputs. A conventional approach is to make a strong assumption of independence between different outputs and defining \(D\) different GPs, which is not valid in most cases. In this work, we use "Coregionalized Regression" approach to take into account the covariances between the outputs \cite{alvarez2011kernels}. With this approach, each output of the multi-output GP is treated as a function that is written as a linear combination of different samples from the underlying single-output GPs whose priors are constructed by the specification of the kernel. Below, we provide a summary of this approach for brevity, the details of which can be found in \cite{alvarez2011kernels}.

Let us assume that we have observations \(X_d = [x_1^d, ..., x_{m_d}^d]\) for the \(d^{th}\) function, \(f_d\), where \(m_d\) stands for the number of observations for that function, and the corresponding outputs are \(F_d = [f_d(x_1^d), ..., f_d(x_{m_d}^d)]\). The covariance matrix that encapsulates all the relations within the process can be written as
\begin{equation}
\label{eqn:hugesigma}
    \boldsymbol{\Sigma}(X,X) = 
    \begin{bmatrix}
    \Sigma(X_1, X_1) & \dots &\Sigma(X_1, X_D)\\
    \vdots & \ddots & \vdots \\
    \Sigma(X_D, X_1) & \dots & \Sigma(X_D, X_D)
\end{bmatrix}.
\end{equation}
The diagonal terms in \((\ref{eqn:hugesigma})\) modeling the auto-covariances can be constructed within the framework of single-output GP. What matters for the multi-output framework is the cross-covariance terms \(\Sigma(X_i,X_j), i \neq j\). To model these matrices, "linear model of coregionalization" (LMC) can be used. LMC is based on representing functions from underlying GPs by several independent latent functions with different kernels: Consider the set of functions \(\{f_d\}_{d=1}^D\). In the LMC framework, these functions are modeled as
\begin{equation} \label{eq30}
\begin{split}
f_d(x)= \sum_{z=1}^{\ Z}  \sum_{i=1}^{\ R_z} a_{d,z}^i u_z^i (x), 
\end{split}
\end{equation}
where \(Z\) is the number of kernels, \(u_z^i(x)\) are independent latent functions with \(cov[u_z^i(x), u_{z'}^{i'}(x)] = k_z(x, x')\) if \(i = i'\) and \(z = z'\), and zero, otherwise, \(a_{d,z}^i\) are scalar coefficients, and \(R_z\) is the number of latent functions that share the same covariance. The covariance of two functions \(k(x, x')_{d, d'} = cov[f_d(x), f_{d'}(x)]\) can be calculated as
\begin{equation}
\label{eqn:covfunc}
    k(x, x')_{d, d'} = \sum_{z=1}^Z \sum_{i=1}^{R_z} a_{d,z}^i a_{d',z}^i cov(u_z^i, u_{z'}^{i'})
\end{equation}
\begin{equation}
    = \sum_{z=1}^Z \sum_{i=1}^{R_z} a_{d,z}^i a_{d',z}^i k_z(x, x')
\end{equation}
\begin{equation}
    = \sum_{z=1}^Z b_{d,d'}^z k_z(x, x'),
\end{equation}
where \(b_{d,d'}^z = \sum_{i=1}^{R_z} a_{d,z}^i a_{d',z}^i\). The covariance matrix is then determined as
\begin{equation} \label{eq31}
k(x, x') = cov[f(x), f(x')] =  \sum_{z=1}^{\ Z} B_z k_z(x, x'),
\end{equation}
where \(B_z \in \mathbb{R}^{D \times D}\) has \(b_{d, d'}^z, d = 1, 2, ..., D, d' = 1, 2, ..., D\), as its \((d, d')^{th}\) element. Furthermore, \(B_z\) is rank \(R_z\), and called the "coregionalization matrix" measuring interrelations of different outputs. Transitioning to matrix form for multiple inputs, we can write
\begin{equation}
\label{eqn:covhugematrix}
    \boldsymbol{\Sigma}(X, X) = \sum_{z=1}^Z B_z \otimes k_z(X, X),
\end{equation}
where the symbol \(\otimes\) stands for the Kronecker product. 

For an observation set \(\mathcal{D}\) containing \(D\) different functions with \(N\) observations from each, similar to single-output GP, the posterior mean at a single point \(x^*\) is
\begin{equation}
\label{eqn:gpproof1}
    \mu_{f^*|\mathcal{D}} = \Sigma_*\Sigma^{-1} f, 
\end{equation}
where \(\Sigma \in \mathbb{R}^{ND \times ND}\) is the matrix in \((\ref{eqn:hugesigma})\), \(\Sigma_* \in \mathbb{R}^{D \times ND}\) has elements \(k(x_i, x^*)_{d, d'}\) for \(i = 1, \dots, N\), and \(d, d' = 1, \dots, D\), and \(f \in \mathbb{R}^{ND \times 1}\) is the output vector containing the outputs of \(D\) functions at \(N\) points.
\section{Method} 

\label{method}
The methodology of conventional level-\(k\) reasoning includes only the discrete levels. For instance, at a particular state, a level-0 driver might prefer the action "turn right" with a probability of 0.8, while a level-1 driver might take the same action with a probability of 0.15. This framework may not be representative enough for a driver who take the same action with 0.4 probability. To put it another way, allowing driver models to have discrete levels of reasoning, which results in action probability distributions that have jumps between these discrete levels, may provide a prohibitively limited model variety. This limitation motivates us to expand the levels from the discrete domain to a continuous domain. In this study, we model continuous levels by employing a Gaussian Process (GP), which accepts discrete levels' action probability distributions (policies), obtained through reinforcement learning (RL), at every state of a particular driver, and form policies in continuous domain. These policies belong to real-valued reasoning levels, instead of integer-valued ones, in the new framework. An illustration of the overall method is given in Figure \(\ref{fig:generalhierarchy}\), where four discrete levels from level-\(0\) to level-\(3\) are employed for representation purposes. We also introduce and prove a theorem which states that the hierarchy between the levels are also well defined for the real-valued levels when the continuous level policy is a linear combination of discrete level policies. This process is explained in the following sections. Throughout the paper, we use \((.)_l\), \(l\in \mathbb{R}^+ \cup \{0\}\), and \((.)_k\), \(k \in \mathbb{N}\), for "real-valued" and "integer-valued" concepts, respectively. 

\subsection{Continuous Hierarchical Modelling}
When we train a level-\(k\) agent using DQN, for each state \(s \in \mathcal{S}\) we obtain a set of Q-values, where each value in this set belongs to an action \(a \in \mathcal{A}\). The "Q-function" for each reasoning level-\(k\), approximated by the corresponding DQN for level-\(k\) and denoted as \(Q_k (s,a)\), is a mapping \(Q_k : S \rightarrow \mathbb{R}^A\), where \(A = |\mathcal{A}|\) is the number of available actions. We have a set \(Q = \{Q_1, ..., Q_n\}\) corresponding to \(n\) different integer-valued levels. The policy of a level-\(k\) agent, denoted as \(\pi_k(a_i|s)\), \(a_i \in \mathcal{A}\), is defined as a probability distribution over actions conditioned on the given state and it is computed by a softmax function with

\begin{equation}\label{eqSoft}
    \pi_k(a_i|s) = \frac{e^{Q_k(s,a_i)}}{\sum_{j} e^{Q_k(s,a_j)}},
\end{equation}
for each \(a_i\). Therefore, at a state \(s\), we have a set \(\pi = \{\pi_0, \pi_1, ..., \pi_n\}\) which is computed by using the set of Q-values through (\(\ref{eqSoft}\))  where \(\pi_k \in \mathbb{R}^{A}, k = 0, \dots, n\).

\begin{figure*}
    \centering
    \includegraphics[width=12cm]{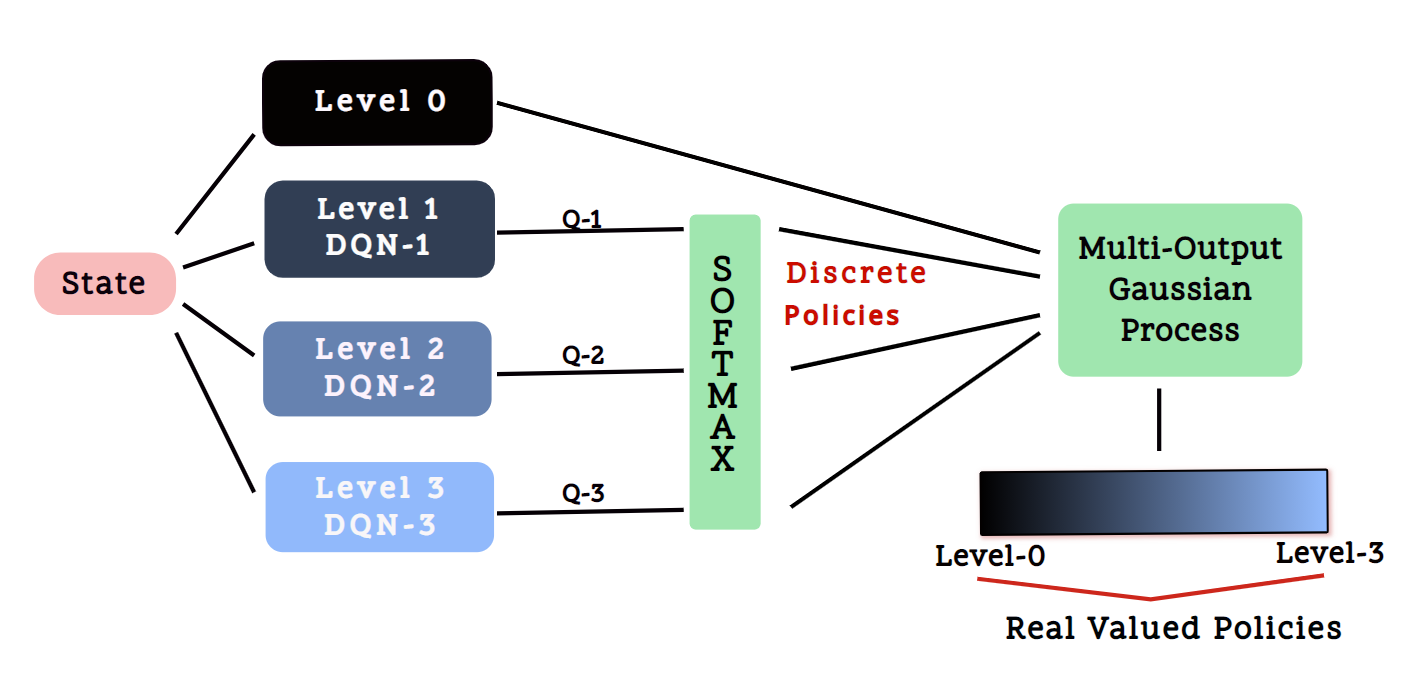}
    \caption{Creating continuous (real-valued) levels. A state is fed into DQN networks of discrete level-\(k\) reasonings. The outputs of DQNs, Q-values, are used to construct a policy through softmax function, which is then used to create a real-valued policy space for the specific state input.}
\label{fig:generalhierarchy}
\end{figure*}

\begin{remark}
Level-0 policy is generally hand crafted and not determined by an RL process. Formation of \(\pi_0\) depends on the specific structure of the level-0 policy. For example, if at a particular state \(s\), the level-0 agent deterministically chooses action \(a_j, j \in \{1,2,...,A\}\), the policy of the agent is given as \(\pi_0(a_j | s) = 1\), and \(\pi_0(a_i | s) = 0 \; \forall i \neq j\). 
\end{remark}

For each state \(s\), we define a GP by using the realizations of \(\pi\) at discrete levels 0, 1, \dots, n, denoted as \(\pi_0(\textbf{a}|s), \pi_1(\textbf{a}|s)..., \pi_n(\textbf{a}|s)\), \(\textbf{a} \in \mathbb{R}^A\). Therefore, the observation set \(\mathcal{D}\) (see Section \ref{background} for the definition of the observation set in GP) is defined as \(\mathcal{D}=\{\pi_0(\textbf{a}|s), \pi_1(\textbf{a}|s), \dots, \pi_n(\textbf{a}|s)\}\). For a given state \(s \in \mathcal{S}\), the distribution of the policy \(\pi_l : \mathcal{S} \rightarrow \mathbb{R}^A\), which is the policy for any real-valued level \(l\), can be calculated as

\begin{equation}
\label{eqn:prediction3}
    \pi_l(\textbf{a}|s) = \begin{bmatrix}
    \pi_l(a_1|s)  \\
     \vdots \\
     \pi_l(a_A|s)
\end{bmatrix} 
\sim \mathcal{N}(\Sigma_* \Sigma^{-1} \boldsymbol{f}, \Sigma_{**} - \Sigma_* \Sigma^{-1} \Sigma_*^T),
\end{equation}
through the GP framework, where \(\pi_l(\textbf{a}|s)\) is the vector containing the probabilities of all actions \(a_i, i=1,2,...,A\), at the state s, at level \(l\), and \(\boldsymbol{f} = [\pi_0(\textbf{a}|s), ... \pi_n(\textbf{a}|s)]\) is the observation vector consisting of the policies related to the integer-valued level-\(k\)s, \(k=1,2,...,n\). The covariance submatrices \(\Sigma\) and \(\Sigma_*\) are defined in \((\ref{eqn:gpproof1})\). \(\Sigma_{**}\) is defined in  \((\ref{eqn:covfunc})\) and has the elements \(k(l, l)_{a, a'}\) for \(a, a' = 1, \dots, A\).

The function with the highest probability in the GP is the mean function. In other words, let \(p(f)\) denote the probability of the function \(f \sim GP(\mu, \Sigma)\). Then,
\begin{equation}
    \argmax_f p(f) = \mu.
\end{equation}
Therefore, since a GP is a probability distribution over functions, we propose the assignment of the predictive mean (see \((\ref{eqn:gpproof1})\)) as the policy for level-\(l\). Then, at any real-valued level \(l^*\), the corresponding policy is represented by \(\widehat{\pi}_{l^*}(\textbf{a}|s) = \mu(\pi_{l^*}(\textbf{a}|s)) \in \mathbb{R}^A\). 

\begin{remark}
In our method, GP nonlinearly predicts the policy of a real-valued level which is a probability distribution over actions. Therefore, for a real-valued level \(l\), the constraint

\begin{equation}
\label{eqn:implicit}
    \sum_{i=1}^{\ A} \widehat{\pi}_l (a_i | s) =  1
\end{equation}
 should be satisfied. It is known that for linear constraints, Gaussian Process implicitly satisfies the constraint in its prediction when the inputs used in training also satisfies the linear constraint \cite{salzmann2010implicitly}. Therefore, the policy constructed by GP satisfies (\(\ref{eqn:implicit}\)). However, in some real-valued levels, even though the constraint in (\(\ref{eqn:implicit}\)) is satisfied, some of the action probabilities are predicted as negative values. Since the relative values of the probabilities are important, to handle this issue we shift the probabilities followed by a normalization to satisfy (\(\ref{eqn:implicit}\)). For instance, at a real-valued level \(l^*\) where GP predicts a negative value for one of the actions, we find
 
 \begin{equation}
     p_{shift} = \min_i \widehat{\pi}_{l^*} (a_i | s).
 \end{equation}
 Then, the probabilities are updated as
 \begin{equation}
  \label{eqn:shiftop}
     \widehat{\pi}^{new}_{l^*} (a_i | s) \leftarrow \frac{\widehat{\pi}_{l^*} (a_i | s) + |p_{shift}|}{\sum_j [\widehat{\pi}_{l^*} (a_j | s) + |p_{shift}|]}.
 \end{equation}
 Operation \((\ref{eqn:shiftop})\) defines a linear transformation on the probability value in the form
 \begin{equation}
 \label{eqn:linearform}
     \widehat{\pi}_{l^*}^{new} (a_i | s) = \beta_1  \widehat{\pi}_{l^*} (a_i | s) + \beta_2,
 \end{equation}
and since the joint gaussian distribution property of the process is invariant under linear transformations, (\(\ref{eqn:shiftop}\)) can be employed, without violating GP properties.    
\end{remark}

\subsection{Best Response in Real-Valued Reasoning Levels}
In conventional level-\(k\) approach, \(k \in \mathbb{N}\), the best response to a level-\(k\) policy is the level-\((k+1)\) policy. In this section, we extend this result by answering the following question: "What is the best response for a level-\(l\) policy, given that \(l\) is a non-negative real number?". We answer this question by first assigning utilities to the level-\(k\) players, k \(\in \mathbb{N}\), in games where their opponents are also players with integer-valued levels. Once we make the utility assignment, we provide a definition for a "pure strategy" and a "mixed strategy" in the context of level-\(k\) reasoning. Finally, we introduce and prove a best response theorem for games that are played among agents with real-valued levels for the case where the policies of real-valued levels are linear combinations of the policies of integer-valued levels.

\begin{definition}
We call the policy \(\pi_k, k \in \mathbb{N}\), as a \emph{pure strategy}, and any linear combination of pure strategies, such as \(\pi_l(a|s)=c_0\pi_0(a|s) + c_1\pi_1(a|s) +...+c_n\pi_n(a|s)\), where \(c_0 + c_1 + ... + c_n = 1\), as a \emph{mixed strategy}.
\end{definition}

\textbf{Utility assignment:} Consider a Player A with a mixed strategy \(\pi_l^A=\sum_{k=0}^n c^A_k\pi_k\), where \(l \in \mathbb{R}^+\), \(\sum_{k=0}^n c^A_k = 1\), \(c_k^A \geq 0\) \(\forall k\), and \(\pi_k\) is the pure strategy associated with level-\(k\). Similarly, consider a Player B, with a mixed strategy \(\pi_l^B=\sum_{k=0}^n c^B_k\pi_k\), \(\sum_{k=0}^n c^B_k=1\), and \(c_k^B \geq 0\) \(\forall k\). The utility of Player A, playing against Player B can be calculated as
\begin{equation}
\label{eqn:utility1}
    u_A(\pi_l^A,\pi_l^B) = \sum_{k=0}^n c_k^A u_A(\pi_k,\pi_l^B),
\end{equation}
where \(u_A(\pi_k,\pi_l^B)\) is the utility induced when the pure strategy \(\pi_k\) is played against \(\pi_l^B\). This utility can be expanded as
\begin{equation}
\label{eqn:utility2}
    u_A(\pi_k,\pi_l^B) = \sum_{j=0}^n c^B_j u_A(\pi_k,\pi_j).
\end{equation}
Substituting \((\ref{eqn:utility2})\) into \((\ref{eqn:utility1})\), it is obtained that
\begin{equation}
\label{eqn:expectedutility}
    u_A(\pi_l^A,\pi_l^B) =\sum_{k = 0}^n \sum_{j = 0}^n c_k^A c^B_j u_A(\pi_k,\pi_j).
\end{equation}

In conventional level-\(k\) reasoning approach, \(k \in \mathbb{N}\), the best response to a level-\(k\) policy is the level-\((k+1)\) policy. On the other hand, the best response in other situations, where the difference between the levels are different than \(1\), is not defined. For example, the situation where a game is played between a level-0 agent and a level-2 agent is not defined, in terms of the payoffs of the agents. Using this best-response definition of the level-\(k\) approach, we assign a \(+1\) utility to a level-\((k+1)\) agent if the agent plays against a level-\(k\) agent, and a 0 utility, otherwise.

\begin{theorem}
Consider a level-\(l\) policy, \(\pi_l\), where \(l \in \mathbb{R}^+\), and a set of level-\(k\) policies, \(\pi_k\), \(k=0,1,..., n\). When the real-valued policy is a linear combination of integer-valued policies in the form \(\pi_l (a|s) = c_0\pi_0(a|s) + c_1\pi_1(a|s) + ... + c_{n-1}\pi_{n-1}(a|s)\), where \(\sum_{j = 0}^{n-1} c_j = 1\) and \(0 \leq c_j \leq 1\) \(\forall j \in \{0, \dots, n-1\}\), then the policy \(\pi_{\omega} (a|s)\) that is a best-response (BR) to \(\pi_l\) is given as

\begin{equation}
\label{eqn:theorem1}
\pi_{\omega} (a|s) = BR(\pi_{l} (a|s))= \sum_{m \in \mathcal{M}} \gamma_m \pi_m(a|s),
\end{equation}
where \(\mathcal{M} = \{i + 1 | c_i \geq max\{c_0, c_1, \dots, c_{n-1}\}\}\), \(i = 0, 1, \dots , n - 1\) and \(\gamma_m\) can be any value subject to the constraints \(0 \leq \gamma_m \leq 1\) and \(\sum_{m \in \mathcal{M}} \gamma_m = 1\). 
\end{theorem}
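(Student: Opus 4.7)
The plan is to reduce the best-response problem to a simple linear program on the simplex of mixing coefficients, exploiting the sparsity of the utility assignment. Let the candidate response be any mixed strategy $\pi_{\omega}(a|s) = \sum_{m=0}^{n} \gamma_m \pi_m(a|s)$ with $\gamma_m \ge 0$ and $\sum_m \gamma_m = 1$. Substituting the decomposition of $\pi_l$ into the expected utility formula (\ref{eqn:expectedutility}), the utility Player A (playing $\pi_{\omega}$) earns against Player B (playing $\pi_l$) is
\begin{equation}
u_A(\pi_{\omega}, \pi_l) \;=\; \sum_{m=0}^{n} \sum_{j=0}^{n-1} \gamma_m\, c_j\, u_A(\pi_m, \pi_j).
\end{equation}

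Next I would apply the utility assignment rule stated just before the theorem: $u_A(\pi_m, \pi_j) = 1$ when $m = j+1$ and $0$ otherwise. This collapses the double sum to a single sum in which only the pairs $(m, j) = (j+1, j)$ contribute, yielding
\begin{equation}
u_A(\pi_{\omega}, \pi_l) \;=\; \sum_{m=1}^{n} \gamma_m\, c_{m-1}.
\end{equation}
Thus choosing a best response reduces to maximizing the linear functional $\gamma \mapsto \sum_{m=1}^{n} \gamma_m c_{m-1}$ over the probability simplex on indices $\{0, 1, \dots, n\}$. Since the coefficient of $\gamma_0$ is zero (no level-$0$ response can score against any $\pi_j$) and all $c_j \ge 0$, an optimal solution never places mass on $\gamma_0$.

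By the standard fact that a linear functional on the simplex attains its maximum exactly on the convex hull of the coordinates whose coefficients are largest, the optimum equals $c^{\ast} := \max_{0 \le i \le n-1} c_i$, and the set of maximizers is precisely the collection of $\gamma$ vectors supported on $\{m : c_{m-1} = c^{\ast}\} = \mathcal{M}$. Any such $\gamma$ satisfies $\gamma_m \ge 0$, $\sum_{m \in \mathcal{M}} \gamma_m = 1$, which is exactly the characterization (\ref{eqn:theorem1}); conversely, any convex combination of the form in (\ref{eqn:theorem1}) attains the value $c^{\ast}$ and is therefore a best response.

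I do not anticipate a serious technical obstacle: the argument is essentially bookkeeping plus a one-line LP fact. The only mildly delicate point is being careful with index ranges, specifically noting that the response can in principle include any level $m \in \{0, \dots, n\}$ but that $\gamma_0$ must be zero at optimum because level-$0$ scores against nothing in $\{\pi_0, \dots, \pi_{n-1}\}$; this is what forces $\mathcal{M} \subseteq \{1, \dots, n\}$ and matches the definition $\mathcal{M} = \{i+1 : c_i \ge c^{\ast}\}$ given in the statement. It is also worth commenting briefly that the inequality $c_i \ge c^{\ast}$ in the definition of $\mathcal{M}$ is effectively an equality (since $c^{\ast}$ is the maximum), so the set is well-defined and nonempty.
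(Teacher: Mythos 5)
Your proposal is correct and follows essentially the same route as the paper's proof: both reduce the problem to maximizing the linear functional $\sum_{m=1}^{n}\gamma_m c_{m-1}$ over the simplex of mixing weights and conclude that the optimum $c^{*}$ is attained exactly by placing all mass on $\mathcal{M}$. The only cosmetic differences are that you admit $\gamma_0$ in the ansatz and then argue it must vanish (the paper omits $\pi_0$ from the candidate response at the outset), and you cite the standard linear-programming-on-a-simplex fact where the paper writes out the bounding inequality chain explicitly.
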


\begin{proof}
Without loss of generality, suppose that \(c_t = c_{t+1} = ... = c_h = c^*\), \(0 \leq t \leq h \leq n-1\), and \(c^* \geq max\{c_0, c_1, \dots, c_{n-1}\}\). Consider a policy \(\pi = \gamma_1 \pi_1 + \gamma_2 \pi_2 + \dots + \gamma_n \pi_n\), where \(\sum_i \gamma_i = 1\), and \(\gamma_i \geq 0\) \(\forall i\in\{1, \dots, n\}\). The expected utility of \(\pi\) playing against \(\pi_l\) can be calculated as
\begin{equation}
    u(\pi,\pi_l) = \gamma_1 c_0 + ... + \gamma_{t+1} c_t + \gamma_{t+2} c_{t+1} + ... + \gamma_{h + 1} c_{h} + ... + \gamma_n c_{n-1}
\end{equation}
\begin{equation}
    = \gamma_1 c_0 + ... + \gamma_{t+1} c^* + \gamma_{t+2} c^* + ... + \gamma_{h + 1} c^* + ... + \gamma_n c_{n-1}
\end{equation}
\begin{equation}
\label{eqn:mergedgamma}
    = \gamma_1 c_0 + ... + (\gamma_{t+1} + \gamma_{t+2} + ... + \gamma_{h + 1}) c^* + ... + \gamma_n c_{n-1}.
\end{equation}
Let 
\begin{equation}
\label{eqn:sumgammas}
    \overline{\gamma} = \gamma_{t+1} + \gamma_{t+2} + ... + \gamma_{h + 1}.
\end{equation}
Then, using (\(\ref{eqn:sumgammas}\)), we can write (\ref{eqn:mergedgamma}) as
\begin{equation}
    u(\pi,\pi_l) = \gamma_1 c_0 + ... + \overline{\gamma} c^* + ... + \gamma_n c_{n-1}
\end{equation}
\begin{equation}
    \leq \gamma_1 c^* + ... + \overline{\gamma} c^* + ... + \gamma_n c^*
\end{equation}
\begin{equation}
    = (\gamma_1 + ... + \overline{\gamma} + ... + \gamma_n)c^*
\end{equation}
\begin{equation}
    = c^*.
\end{equation}
Therefore, \(c^*\) is the maximum utility that can be obtained against \(\pi_l\). Defining \(\mathcal{J} = \{0, 1, \dots, n\}\) and \(\mathcal{M} = \{t+1, t+2, \dots, h, h+1\} = \{i+1|c_i \geq max\{c_0, c_1, \dots, c_{n-1}\}\}\), if we choose \(\overline{\gamma} = 1\) (see (\(\ref{eqn:sumgammas}\))), which means that \(\gamma_j = 0\), for \(j \in \mathcal{J} \backslash \mathcal{M}\), and if we choose \(\gamma_m\), \(m \in \mathcal{M}\) such that \(\sum_{m \in \mathcal{M}} \gamma_m = 1\) and \(\gamma_m \geq 0\), \(\forall m \in \mathcal{M}\), then the utility of the policy \(\pi_{\omega} = \sum_{m \in \mathcal{M}} \gamma_m \pi_{m}\) against \(\pi_l\) is \(u(\pi, \pi_l) = c^*\) which is the maximum possible utility. 
\end{proof}

\section{Validation with Traffic Data}
\label{validation}
In this section, we use US101 data \cite{us101} to show how representative the real-valued reasoning levels are in terms of modeling driver behavior. To obtain the continuous, real-valued driver reasoning levels, we first need the conventional level-\(k\) models. Observation and action spaces, the reward function, and synergistic employment of deep Q-learning and game theory employed to obtain these discrete models for this dataset is already explained in \cite{albaba2020driver}. Therefore, for brevity, we omit this part. We employ these discrete levels as inputs to the Gaussian Process (GP) to obtain real-valued continuous levels, using the process given in Section \ref{method}. We use Kolmogorov-Smirnov (K-S) goodness of fit test \cite{conover1972kolmogorov}, to compare the policies of the proposed model and that of real human drivers.

\subsection{Construction of the Gaussian Process}
In constructing the multi-output GP in order to create the policy space, we chose Matérn kernel due to its finite differentiability, which makes it a more reasonable choice for most dynamical systems \cite{stein1999interpolation}. The Matérn kernel we used is given as 
\begin{equation} \label{eq19}
\begin{split}
k(x,x') = \sigma^2(1+\frac{\sqrt{3}(x-x')}{\beta})exp(-\frac{\sqrt{3}(x-x')}{\beta})
\end{split}
\end{equation}
where \(\beta \in \mathbb{R}^+\) and \(\sigma \in \mathbb{R}^+\) are the length scale and variance parameters, respectively. For optimization of the parameters and constructing the GP, we use GPy library \cite{gpy2014}. The method presented in this paper uses the LMC model \cite{alvarez2011kernels} with \(R_Z=7\) and \(Z=7\), where \(Z\) and \(R_Z\) are the number of kernels and the number of latent functions sharing the same covariance, respectively. Within these \(7\) kernels, one is a "bias kernel" that adjusts the mean of the function put into the GPy, whereas the other six kernels are Matérn kernels with different \(\beta\) constraints. We constrained \(\beta\) in six Matérn kernels to the values in the set \{0.25,0.5,0.75,1.0,1.25,1.5\} in order to increase the contributions coming from the neighboring levels. 

\subsection{Comparison of Continuous Game Theoretical Policies and Policies Extracted from Driver Data}
We first provide definitions that we will be using extensively throughout the following sections. Then, we will detail the approach used in the comparison phase. 

\begin{definition}{}
Discrete (integer-valued) game theoretical (DGT) policy is a probability distribution over actions. We use DGTs for comparison purposes. To see how DGTs are obtained, see \cite{albaba2020driver}. In this paper, the discrete level policies are level-0, level-1, level-2 and level-3 policies. It is noted that these policies use a discrete observation space and a continuous action space sampled from discrete bins.
\end{definition}

\begin{definition}{}
Continuous (real-valued) game theoretical (CGT) policy, the one proposed in this paper, is also a probability distribution over actions. However, the policies can be of a reasoning level-\(l\), where \(l\) is a real number in the interval \([0, 3]\). 
\end{definition}

\begin{algorithm}
	\caption{Comparison of CGT Policies with Traffic Data Policies for a Single Driver} 
	\begin{algorithmic}[1]
        \For{$i = 1$ to totalStates}
        
            \If{\( n_{visits}^i \geq n_{th}\)}
                \State Increment \(n_{comparisons}\) by 1
                \State Get Q-values of discrete levels \{\(Q_1, Q_2, Q_3\)\} by \par
                     \hskip\algorithmicindent using corresponding DQN networks
                \State Get Q-values \(Q_0\) of level-0 driver (see Section \ref{method})
                \State Construct a multioutput GP model where x-axis \par
                     \hskip\algorithmicindent contain the interval \( [0,3]\) and y-axis carries\par
                     \hskip\algorithmicindent policies \{\(\pi_0, \pi_1, \pi_2, \pi_3\)\}.
                \State Set initial levels \emph{InitLevels} \par
                     \hskip\algorithmicindent to \{\(0, 1, 2, 3\)\}
                \State Set \(MaxSteps\) to 50
                
                \For{j = 1 to 4}
                    \State Run Algorithm \ref{alg:simanneal} with initial level \(InitLevels_j\) \par
                        \hskip\algorithmicindent \hskip\algorithmicindent and \(MaxSteps\) to find \(l_j\) and \(criticalvalue_j\).
                \EndFor
                
                \State Set the optimum level \par
                     \hskip\algorithmicindent \(l_{opt} = \argmax_{l_j} criticalvalue\)
                \State Set the optimum critical value \par
                     \hskip\algorithmicindent \(crit_{opt} = \max criticalvalue\) 
                \If{\(crit_{opt} \leq \theta_{th}\)}
                    \State Do no increment \(n_{success}\)
                \Else
                    \State Increment \(n_{success}\) by 1
                \EndIf
            \EndIf
                
        \EndFor
        \State Percentage of successfully modeled states = \( 100 \frac{n_{success}}{n_{comparisons}}\)
	\end{algorithmic} 
	\label{alg:comparison}
\end{algorithm}

\begin{algorithm}
	\caption{Simulated Annealing for Finding the Level of a Driver} 
	\begin{algorithmic}[1]
	    \State Set initial temperature \(T_S\) to 2.
	    \State Set initial level to \(l\).
	    \State Find the action probability distribution \(P_{data}\) at level \(l\) from traffic data
	    \State Find the action probability distribution \(P_{model}\) at level \(l\) with GP
	    \State Shift and renormalize \(P_{model}\) if it contains a negative value (see Section \ref{method})
	    \State Calculate initial cost \(g(l)\) by using \(P_{model}\) and \(P_{data}\), store the critical value in \(cv\)
		\For {$iteration=1,2,\ldots,MaxSteps$}
			\State Find a neighbor \(l_{new}\) around \(l\)
			\State Update the action probability distribution \(P_{model}\) at level \(l_{new}\) with GP
			\State Calculate \(g(l_{new})\) by using \(P_{model}\) and \(P_{data}\), update \(cv_{new}\).
			\State Calculate acceptance probability \(p_{acc}\) using \(exp(-\delta / T_S)\), where \(\delta = cv_{new} - cv\)
			\State Set random variable \(p_{random}\) from the interval \([0,1]\) uniformly
			\If{$p_{acc} > p_{random}$}
			    \State Assign \(l_{new}\) to \(l\)
			    \State Assign \(cv_{new}\) to \(cv\)
            \EndIf
            \State Update \(T_S\) with \(T_S = 0.90  T_S\)
		\EndFor
		\State Return \(l\) and \(cv\) which are the final results of the search
	\end{algorithmic} 
	\label{alg:simanneal}
\end{algorithm}

Real driver policies, the probabilities over actions, are found by calculating the frequencies of the actions chosen by the driver at a given state. The probabilities below \(0.01\) are fixed to \(0.01\) in order to eliminate the possibility of zero-probability actions. Once the driver policy is obtained from data, this policy is compared with the continuous policies created by the GP using the K-S test. During the comparison, the search over the continuous policy space is conducted using simulated annealing \cite{kirkpatrick1983optimization} (see Algorithm \ref{alg:simanneal}). The algorithm used for a single driver is given in Algorithm \ref{alg:comparison}. In Algorithm \ref{alg:comparison}, \(l_j\) corresponds to the level with the highest critical value found by the \(j\)th simulated annealing started at \(InitLevels_j\). \(l_{opt}\) corresponds to the level of the driver with the highest critical value among all the searches and \(crit_{opt}\) is the result of K-S test at that level measuring the similarity of CGT policy and traffic data policies.

\section{Results and Discussion}
\label{results}
Figure \(\ref{fig:individuals}\) shows the percentage of successfully modeled real driver states using Discrete Game Theoretical (DGT) and Continuous Game Theoretical (CGT) policies. The x-axes correspond to the driver ID numbers whereas the y-axes display the percentage of correctly modeled states. Therefore, each vertical line corresponds to the successfully modeled state percentage for an individual driver. The figure shows that CGT can model a significantly higher percentage of states, as expected. This is not a surprising result since CGT contains an infinitely large policy space while DGT contains only discrete levels. This figure is provided to demonstrate the flexibility obtained by employing real-valued reasoning levels instead of integer-valued ones. 

Another demonstration of this is given in Figure \(\ref{fig:gridresult}\). The x and y axes correspond to percentage of successful models made by different methods and the color of a cell in the grid represents the number of drivers falling into that cell. For instance, the white cells in Figure \(\ref{fig:gridresult}\) indicate that there are approximately 80 drivers whose states are modeled by DGT policies with nearly \(45\%\) success and by CGT policies with nearly \(85\%\) success. In this representation, being in the upper side of x = y line is interpreted as an indicator of better modeling performance. In other words, the brighter the cells at the upper left part, the better the model in y-axis is than the model in x-axis is.

\begin{figure}
    \centering
    \includegraphics[width=14cm]{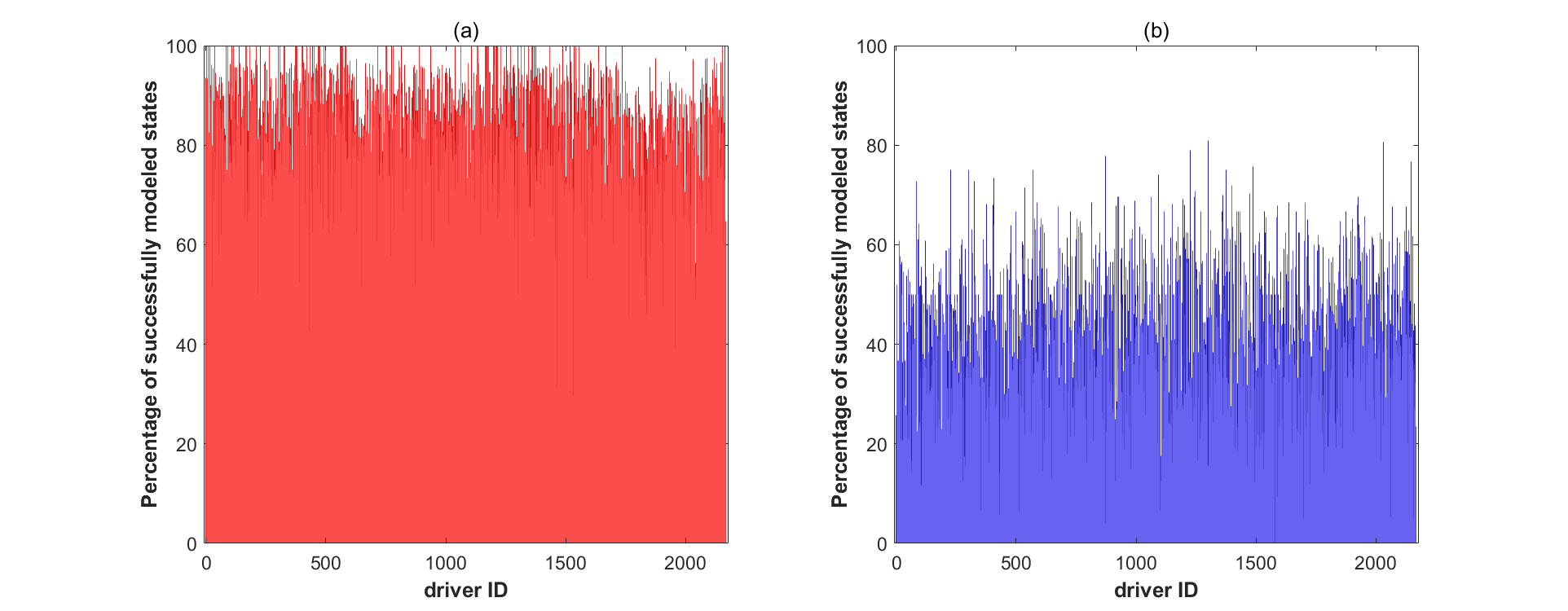}
    \caption{Percentage of successfully modeled states, for each individual driver, by (a) CGT and (b) DGT policies.}
    \label{fig:individuals}
\end{figure}

\begin{figure}
    \centering
    \includegraphics[width=9cm]{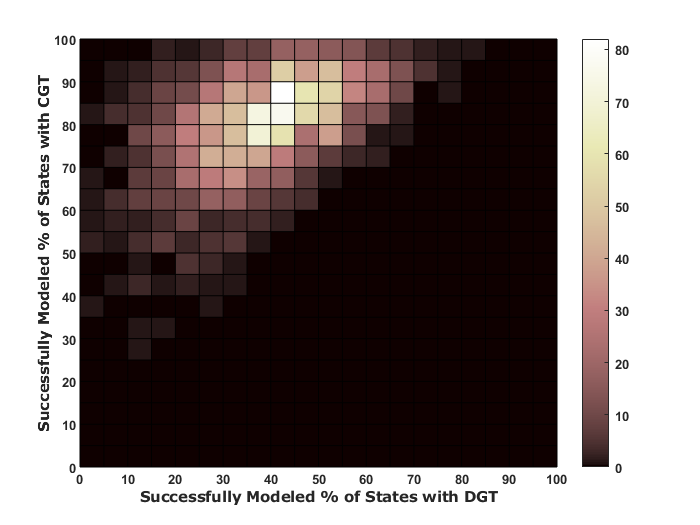}
    \caption{A 2D color map that displays the number of drivers whose x\% of the
visited states are successfully modeled by the DGT policy and y\% of  the visited states are successfully modeled by
by the CGT policy.}
    \label{fig:gridresult}
\end{figure}

\begin{table}[]
\centering
\begin{tabular}{|l|c|}
\hline
Method                     & \multicolumn{1}{l|}{Mean \% of Successfully Modeled States} \\ \hline
Discrete State Space DGT   & 34.75                                                       \\ \hline
Continuous State Space DGT & 72.72                                                       \\ \hline
Discrete State Space CGT   & 80.30                                                       \\ \hline
\end{tabular}
\captionof{table}{Comparison of DGT and CGT in terms of modeling performance\label{tab:successtable}}
\end{table}

It is noted that CGT or DGT can use either continuous or discrete state spaces as their observation space. This is independent of the policies being discrete or continuous. The results in Figures \(\ref{fig:individuals}\) and \(\ref{fig:gridresult}\) are obtained using a discrete state space. In \cite{albaba2020driver}, it is shown that an approach relying on continuous state space modeling is better in terms of modeling performance. This is also demonstrated in Table \(\ref{tab:successtable}\), where continuous state space DGT is shown to model a considerably higher percentage of states than the discrete state space DGT. However, it is also shown in Table \(\ref{tab:successtable}\) that even with a discrete state space, CGT performs better than DGT.

What is more interesting to observe is the distribution of levels among real drivers. Figure \(\ref{fig:scatter}\) shows how this distribution appears for each driver in the traffic data. Each yellow dot on the figure indicates a certain level (given in the y-axis) assigned for a given state of an individual driver (driver ID is given in the x-axis). It is observed that most of the drivers' reasoning levels are accumulated between level-\(0\) and level-\(1.75\). The significant gap at level-\(1.75\) and above indicates that in general human drivers act as if other drivers in the traffic do not have reasoning levels of 1 or above. Curiously, this finding is in harmony with the experimental results reported in \cite{camerer2011behavioral}, \cite{nagel1995unraveling} and pointed out in \cite{lee2012game}, where it is found that in the Beauty Contest game, for example, with \(\rho = 2/3\) or \(\rho = 1/2\), humans behave mainly as level 1 and 2 and rarely level-3. In \cite{nagel1995unraveling}, unlike in this work, a simple level-0 strategy, corresponding to the bet of 50, which is the average number, is used. On the other hand, in this work, level-0 reasoning corresponds to a strategy which is a driving style determining actions in many different scenarios, and consequently, it is a dramatically more complex strategy in comparison to \cite{nagel1995unraveling}. This explains the low frequency of level-0 players in \cite{nagel1995unraveling}.

\begin{figure}
    \centering
    \includegraphics[width=15cm]{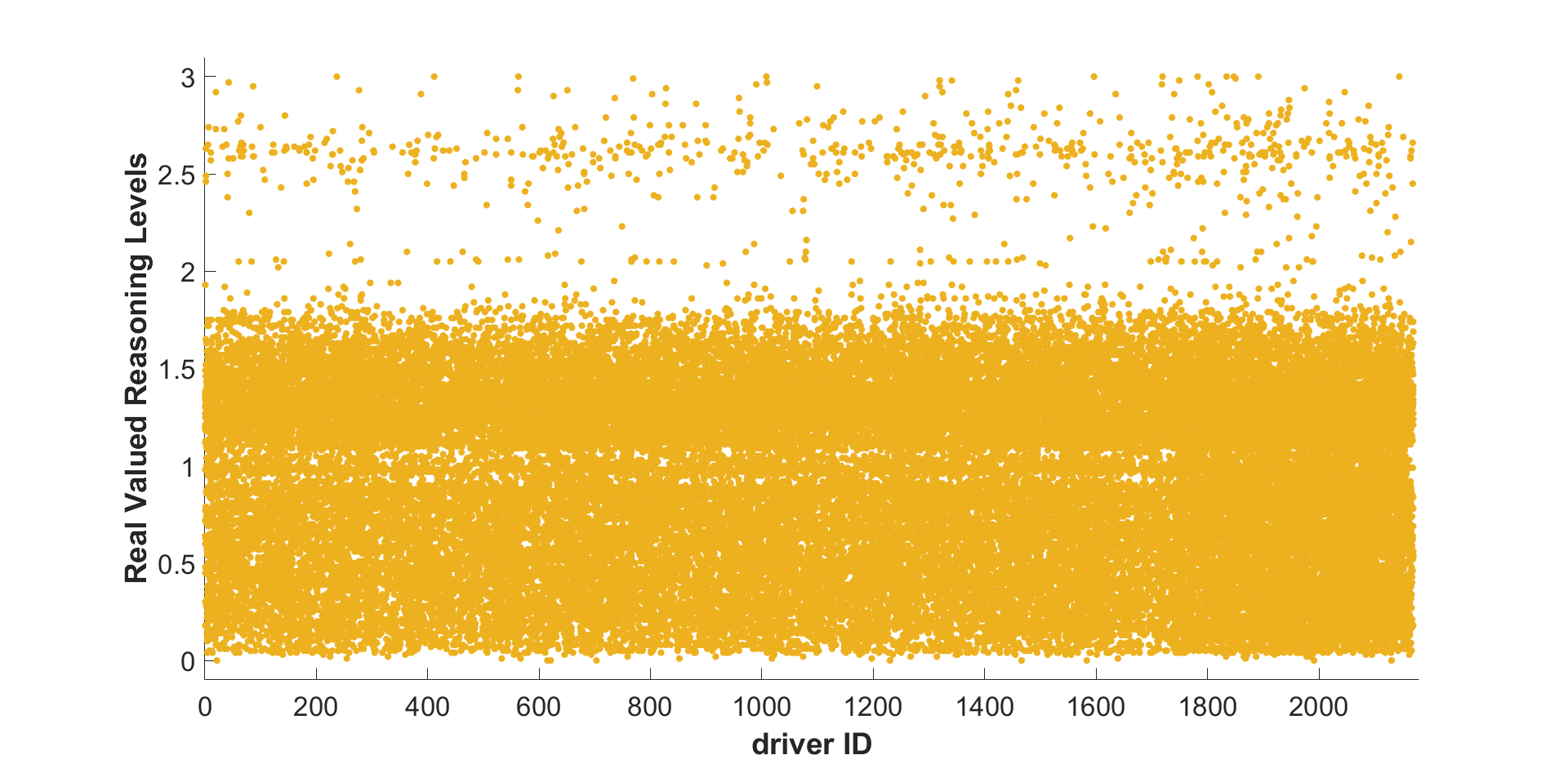}
    \caption{For each driver, detected levels for all states that are correctly modeled are shown with markers. The states where the null hypothesis is rejected are excluded.}
    \label{fig:scatter}
\end{figure}

\begin{figure}
    \centering
    \includegraphics[width=15cm]{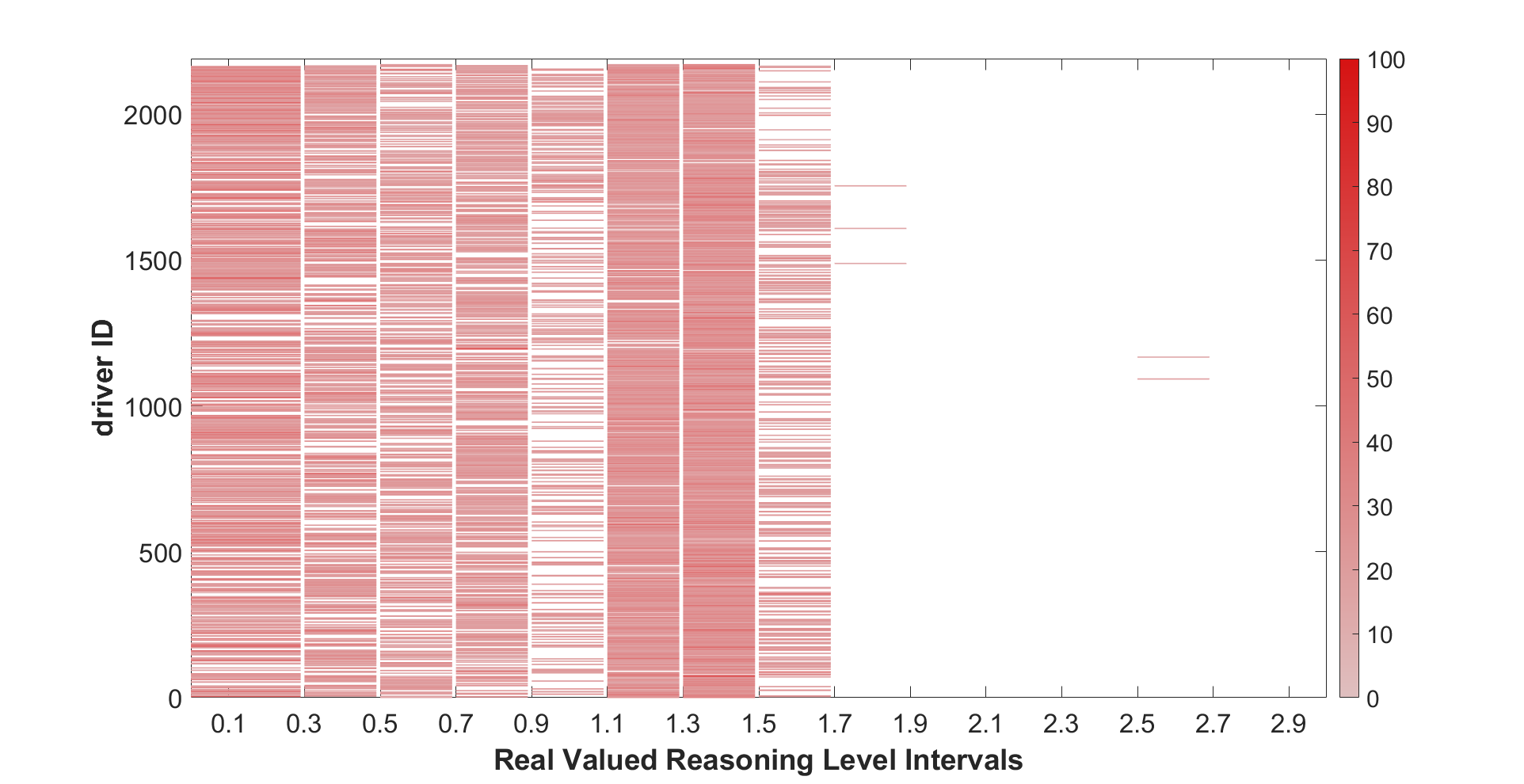}
    \caption{For each driver, three of the most occupied reasoning intervals are displayed. The color-bar indicates the percentages of the states that belong to a given interval.}
    \label{fig:colorplot}
\end{figure}

Another demonstration of level distribution of real drivers is given in Figure \(\ref{fig:colorplot}\). The y-axis shows the driver ID whereas the x-axis indicates the intervals of real-valued reasoning levels. In particular, intervals are quantized uniformly in which each interval has a size of 0.2 such as \([0.9,1.1)\) except the first and last interval, which are given as \([0,0.3)\) and \([2.7,3.0]\). The color-bar on the right shows the percentage of the states in the corresponding interval. The figure demonstrates that human drivers mostly have reasoning levels in the interval \([1.1,1.5)\), followed by the interval \([0.0, 0.9)\). As suggested by the previous result, reasonings in higher levels are not common. 

\section{Summary}
\label{conclusion}
In this study, we proposed a refinement to level-\(k\) game theoretical reasoning concept where we introduced continuous levels through multi-output Gaussian Processes. This fills a gap in the literature of level-\(k\) game theoretical reasoning concept, which limits us to a few discrete levels for modeling human driver behavior. We also put forward a best response theorem to show that the hierarchical relation is still well defined when the real-valued level policies are linear combinations of integer-valued ones. The proposed approach is validated on real traffic data. Employing US101 driving data, and using Kolmogorov-Smirnov goodness of fit test, it is shown that most of the drivers embrace a behavioral reasoning that is consistent with previous studies.


\bibliographystyle{ieeetr}
\bibliography{references}  

\end{document}